\newcommand{\xpt}{\edef\f@size{\@xpt}\rm}
\def\etc{\emph{etc}}
\renewcommand\vec[1]{\ensuremath\boldsymbol{#1}}
\renewcommand\cdots{...}
\newcommand{\vy}{\boldsymbol{y}}
\newcommand{\mW}{\boldsymbol{W}}
\newcommand{\vx}{\boldsymbol{x}}
\newcommand{\mbr}[1]{\mathbb{R}^{#1}}
\newcommand{\vv}{\boldsymbol{v}}
\newcommand{\idx}[1]{\mathcal{I}_{#1}}
\newcommand{\vz}{\boldsymbol{z}}
\newcommand{\vzeta}{\boldsymbol{\zeta}}
\newcommand{\vphi}{\boldsymbol{\phi}}
\newcommand{\enorm}[1]{\left\|{#1}\right\|_2}
\newcommand{\set}[1]{\left\{#1\right\}}
\DeclareMathOperator*{\argmin}{arg\,min}
\newtheorem{proposition}{Proposition}
\def\eg{\emph{e.g.}}
\newcommand{\overbartwo}[1]{\mkern 2mu\overline{\mkern-2mu#1\mkern-2mu}\mkern 2mu}
\newcommand{\vvarphi}{\boldsymbol{\varphi}}
\newcommand{\cov}{\boldsymbol{\Sigma}}
\newcommand{\mPhi}{\boldsymbol{\Phi}}
\newcommand{\mLam}{\boldsymbol{\Lambda}}
\newcommand{\vmu}{\boldsymbol{\mu}}
\newcommand{\mP}{\boldsymbol{\Theta}}
\newcommand{\stkout}[1]{{\ifmmode\text{\sout{\ensuremath{#1}}}\else\sout{#1}\fi}}
\newcommand{\comment}[1]{}
\title{\Large CNN-based Action Recognition and Supervised Domain Adaptation on 3D Body Skeletons via Kernel Feature Maps}
\author{Yusuf Tas\thanks{
This work is under review. Please respect the authors' efforts by not copying/borrowing/plagiarizing bits and pieces of this work for your own gain.}\textsuperscript{$\;\,$,1,2}\qquad Piotr Koniusz\textsuperscript{$*$,1,2}\\
$^1$Data61/CSIRO, $^2$Australian National University\\
firstname.lastname@\{data61.csiro.au\textsuperscript{1}, anu.edu.au\textsuperscript{2}\}
}
\newcommand\keywords[1]{}
\begin{document}

\maketitle

\def\arxiv{arxiv}
\begin{abstract}
Deep learning is ubiquitous across many areas areas of computer vision. It often requires large scale datasets for training before being fine-tuned on small-to-medium scale problems.  Activity, or, in other words, action recognition, is one of many application areas of deep learning. While there exist many Convolutional Neural Network architectures that work with the RGB and optical flow frames, training on the time sequences of 3D body skeleton joints is often performed via recurrent networks such as LSTM.

In this paper, we propose a new 
representation which encodes sequences of 3D body skeleton joints in texture-like representations derived from mathematically rigorous kernel methods. Such a representation becomes the first layer in a standard CNN network \eg, ResNet-50, which is then used in the supervised domain adaptation pipeline to transfer information from the source to target dataset. This lets us leverage the available Kinect-based data beyond training on a single dataset and 
outperform simple fine-tuning on any two datasets combined in a naive manner. More specifically, in this paper we  utilize the overlapping classes between datasets. We associate datapoints of the same class via so-called commonality, known from the supervised domain adaptation. 
We demonstrate state-of-the-art results on three publicly available benchmarks.
\end{abstract}
\section{Introduction}
In recent years, we have witnessed a great increase in the usage and development of deep learning frameworks such as Convolutional Neural Networks (CNN). 
Starting from an outstanding paper on the AlexNet architecture \cite{alex2012}, application areas such as text processing, speech recognition, feature learning and extraction, semantic segmentation, object detection and recognition have adopted deep learning since \cite{girshick2014,collobert2008,hinton2012deep,ren2015faster,donahue2014decaf}. 

Action recognition aims to distinguish between different action classes such as walking, pushing, hand shaking, kicking, punching, to name but a few of action concepts. The ability to recognize human actions enables progress in many application areas verging from the video surveillance to human-computer interaction \cite{herath2017going}. Videos have been the main source of the data for action recognition, however, data sources such as RGB-D have become popular since the introduction of the Kinect sensor as they facilitate tracking 3D coordinates of human skeleton body joints which form time sequences.
Similar to the object classification, the past action recognition systems relied on handcrafted spatio-temporal feature descriptors such as \cite{bobick2001recognition,laptev2005space,klaser2008spatio}, with a notable shift to deep learning frameworks \cite{ji20133d,karpathy2014large,simonyan2014two,spat_tmp_res} which combine RGB and optical flow CNN streams. However, little has been done to investigate the use of sequences of 3D body skeleton joints in CNNs, with an exception of \cite{new_skeleton}.

In this paper, we focus on the action recognition of sequences of 3D body skeleton joints and propose an 
input layer which we combine with off-the-shelf CNNs. This enables us to further pursue our goal of the supervised domain adaptation  to leverage Kinect-based datasets as the known supervised domain adaptation approaches \cite{tzeng_transfer,me_domain} are based on CNNs rather than the recurrent networks such as RNN and LSTM \cite{du2015hierarchical,zhu2016co,liu2016spatio}.

It has been shown that in deep networks, early layers recognize edges, corners, basic shapes and structures; prompting similarity to handcrafted features. However, in the consecutive layers, learned filters respond to more complex stimuli \cite{zeiler2014visualizing}. This attractive property of deep learning together with the shift-invariance of pooling result in a superior performance compared to handcrafted features. 
Even more powerful are the residual CNN representations \cite{he_residual,spat_tmp_res} which have the ability to bypass the local minima resulting from the non-convex nature of CNN networks. Therefore, our work is based on the ResNet-50 model.

Papers on human action recognition use several datasets such as KTH\cite{schuldt2004recognizing}, HMDB-51\cite{kuehne2011hmdb}, SBU-Kinect-Interaction\cite{yun2012two}, UTKinect-Action3D\cite{xia2012view}, NTU RGB+D\cite{shahroudy2016ntu}, most of which have a significant overlap of the class concepts describing actions. 
Thus, we adopt a domain adaptation approach based on the class-wise mixture of alignments of second-order scatter matrices \cite{me_domain}. We apply it to time sequences of 3D body skeleton joints to transfer the knowledge between the overlapping classes of two datasets. 
Our contributions are:

\setlist[enumerate,1]{label={(\roman*)}}
\vspace{0.1cm}
\begin{enumerate}[topsep=0pt,itemsep=0.1pt,leftmargin=16pt] 
\item We propose a novel method that encodes sequences of 3D body skeleton joints into a kernel feature map representation suitable for the use with off-the-shelf CNNs. Our representation enjoys a sound mathematical derivation based on kernel methods \cite{scholkopf_kern}.
\item We are the first to adapt the supervised domain adaptation \cite{me_domain} for the action recognition on time sequences of 3D body skeleton joints. We extend the so-called mixture alignment of classes \cite{me_domain} to work with datasets which class concepts match partially.  
\end{enumerate}


\section{Related Work}
First, we describe the most popular CNN action recognition models followed by the 3D body joint representations. Subsequently, we focus on the most related to our approach techniques.

\vspace{0.05cm}
\noindent{\bf{CNNs for Action Recornition.}} 
Ji et al. \cite{ji20133d} propose a CNN model to utilize 3D structure in videos by multiple convolution operations. Karpathy et al. \cite{karpathy2014large} propose a method called `slow fusion' which learns temporal information by feeding sequentially parts from the video to the algorithm. Simonyan and Zisserman \cite{simonyan2014two} propose a two-stream network which benefits from both spatial domain with RGB images and temporal domain with optical flow. 

\vspace{0.05cm}
\noindent{\bf{3D Body Joint Sequences.}} 
Systems such as Microsoft Kinect can locate body parts and produce a set of articulated connected body joints that evolve in time and form time sequences of 3D coordinates \cite{zatsiorsky_body}. 
Action recognition via sequences of 3D body skeleton joints has received a wider attention in the community, as witnessed by a survey paper \cite{presti20153d}.

While the RGB-based video sequences contain background, clutter and other sources of noise, the advantage of skeleton-based representations is that they can accurately describe human motion. This was first demonstrated by Johansson \cite{johansson_lights} in his seminal experiment involving the moving lights display. By observing moving body joints that represent \eg, elbow, wrist, knee, ankle, one can tell the action taking place. Moreover, sensors such as Kinect fuse depth and RGB frames, and combine the body joint detector, tracker \cite{shotton2013real}, and segmentation to robustly separate the background clutter from the subject's motion. 
For any given subject/action, the 3D positions of body joints evolve spatio-temporally. 

Various descriptors of body joints have been proposed \eg, the motion of 3D points is used in \cite{hussein_action,lv_3daction}, orientations w.r.t. a reference axis are used by \cite{parameswaran_viewinvariance} and relative body-joint positions are used in \cite{wu_actionlets,yang_eigenjoints}. Connections between body segments are used in \cite{yacoob_activities,ohn_hog2,ofli_infjoints,vemulapalli_SE3}. In contrast, we represent sequences of 3D body-joints by a kernel whose linearization yields texture-like feature maps which capture complex statistics of joints for CNN.

\vspace{0.05cm}
\noindent{\bf{Map generation from 3D Body Joint Sequences.}} 
%
A recent paper \cite{new_skeleton} forms texture arrays from 3D coordinates of body joints. Firstly, 4 key body joints are chosen as reference to form a center of coordinate system by which the 3D positions of remaining body joints are shifted before conversion into cylindircal coordinates. Coordinate of each body joint is stacked along rows while temporal changes happen along columns. This results in 12 maps resized to $224\!\times\!224$ and passed to 12 CNN streams combined at the {\em FC} layer.

Our method is somewhat related in that our feature maps resemble textures. However, our maps are obtained by a linearization of the proposed by us kernel function which measures similarity between any pair of two sequences. 
The parameters of these kernels introduce a desired degree of shift-invariance in both spatial and temporal domains. 
Our approach is also somewhat related to kernel descriptors for image recognition \cite{ker_des}, 
 Convolutional Kernel Networks \cite{ckn} and 
kernelized covariances \cite{cavazza_kercov} for action recognition, a time series kernel on scatter matrices 
\cite{gaidon_timekern} and a spatial compatibility kernel \cite{tensor_eccv} that yields a tensor descriptors. In contrast, our layer captures third-order co-occurrences between 3D skeleton body joints and temporal domain to produce texture-like feature maps that are passed to CNN.

\begin{figure}[t]
\centering
%
\begin{subfigure}[b]{0.66\linewidth}
\centering\includegraphics[trim=0 0 0 0, clip=true, height=2.2cm]{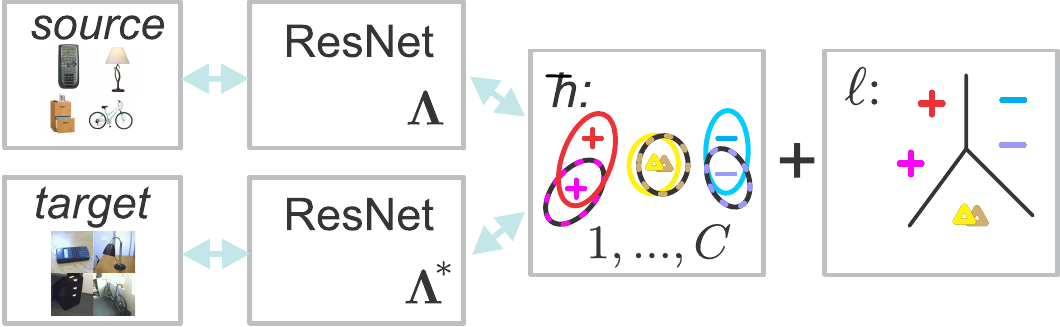}
\caption{\label{fig:cnn1}}
\vspace{-0.1cm}
\end{subfigure}
\begin{subfigure}[b]{0.33\linewidth}
\centering\includegraphics[trim=0 0 0 0, clip=true, height=2.2cm]{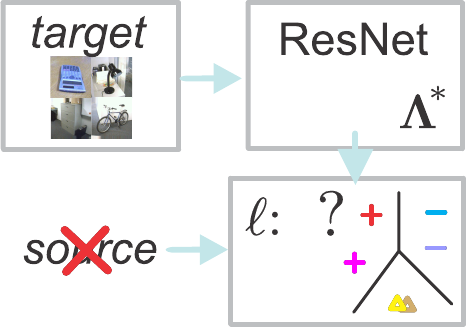}
\caption{\label{fig:cnn2}}
\vspace{-0.1cm}
\end{subfigure}
\ifdefined\arxiv
\vspace{-0.45cm}
\else
\vspace{0.2cm}
\fi
\caption{Supervised Domain Adaptation \cite{me_domain}. Figure \ref{fig:cnn1}: The source and target network streams are combined by the classification and alignment losses $\ell$ and $\hbar$ (end-to-end learning) which operate on the feature vectors from the final {\em FC} layers of ResNet-50 streams $\mLam$ and $\mLam^{*\!}$. Loss $\hbar$ aligns covariances for $C$ classes to facilitate transfer learning. Figure \ref{fig:cnn2}: At the test time, the target stream only and the classifier are used.
}\vspace{-0.3cm}
\label{fig:cnn_all}
\end{figure}

%
\vspace{0.05cm}
\noindent{\bf{Supervised Domain Adaptation.}} 
In this paper, we employ the supervised domain adaptation which role is to transfer knowledge from the labeled source to labeled target dataset and outperform naive fine-tuning on combined datasets. We adapt an approach \cite{me_domain} based on the mixture of alignments of second-order statistics. One alignment per class per source and target streams is performed  
to discover the so-called commonality \cite{me_domain} between the data streams. 
Thus, both CNN streams learn a transformation of the data into this shared commonality. Figures \ref{fig:cnn1} and \ref{fig:cnn2} show the training and testing procedures. Training requires a trade-off between alignment and training losses $\hbar$ and $\ell$ operating on source and target streams $\mLam$ and $\mLam^{*\!}$. Testing uses only the target stream $\mLam^{*\!}$ and the pre-trained classifier.

Approach \cite{me_domain} assumes that the source and target data have to share the same set of labels. We relax this assumption to perform transfer between the classes shared between both datasets. Thus, we employ separate source and target classifiers and perform the alignment. 

\section{Preliminaries}
\label{sec:background}

In what follows, we explain our notations and the necessary background on shift-invariant RBF kernels and
their linearization, which are needed for deriving a kernel on sequences on 3D body skeleton joints together with its linerization into feature maps.

\vspace{0.05cm}
\noindent{\bf{Notations.}} 
\label{sec:notations}
%
%
The Kronecker product is denoted by $\otimes$. $\idx{N}$ denotes the index set $\set{1, 2,\cdots,N}$. 
%
%
We use the MATLAB notation $\vv\!=\![\text{begin}\!:\!\text{step}\!:\!\text{end}]$ to generate a  vector $\vv$ with elements starting as {\em begin}, ending as {\em end}, with stepping equal {\em step}. Operator `$;$' in $[\vx; \vy]$ concatenates vectors $\vx$ and $\vy$ (or scalars) while $[\mPhi_i]_{i\in\idx{J}}$ concatenates $\mPhi_1,\cdots,\mPhi_J$ along rows.

\vspace{0.05cm}
\noindent{\bf{Kernel Linearization.}} 
\label{sec:kernel_linearization}
In the sequel, we use Gaussian kernel feature maps 
detailed below to embed 3D coordinates and their corresponding temporal time stamp into a non-linear Hilbert space and perform linearization which will result in our texture-like feature maps.

\begin{proposition}
\label{pr:gaus_lin}
Let $G_{\sigma}(\vx\!-\!\vy)=\exp(-\!\enorm{\vx\!-\!\vy}^2/{2\sigma^2})$ denote a Gaussian RBF kernel centered at $\vy$ and having a bandwidth $\sigma$. Kernel linearization refers to rewriting this $G_{\sigma}$ as an inner-product of two infinite-dimensional feature maps. To obtain these maps, we use a fast approximation method based on probability product kernels \cite{jebara_prodkers}.
Specifically, we employ the inner product of $d'$-dimensional isotropic Gaussians given $\vx,\vy\!\in\!\mbr{d'}\!$. Consider equation: 
\begin{align}
&\!\!\!\!\!\!\!G_{\sigma}\!\left(\vx\!-\!\vy\right)\!\!=\!\!\left(\frac{2}{\pi\sigma^2}\right)^{\!\!\frac{d'}{2}}\!\!\!\!\!\!\int\limits_{\vzeta\in\mbr{d'}}\!\!\!\!G_{\sigma/\sqrt{2}}\!\!\left(\vx\!-\!\vzeta\right)G_{\sigma/\sqrt{2}}(\vy\!\!-\!\vzeta)\,\mathrm{d}\vzeta.
\label{eq:gauss_lin}
\end{align}
Eq. \eqref{eq:gauss_lin} can be approximated by replacing the integral with the sum over $Z$ pivots $\vzeta_1,\cdots,\vzeta_Z$: 
\begin{align}
&\!\!\!\!\!\!\!G_{\sigma}(\vx\!-\!\vy)\approx\left<\sqrt{c}\vvarphi(\vx), \sqrt{c}\vvarphi(\vy)\right>, \text{ where } \vvarphi(\vx)=\left[{G}_{\sigma/\sqrt{2}}(\vx-\vzeta_1),\cdots,{G}_{\sigma/\sqrt{2}}(\vx-\vzeta_Z)\right]^T\!\!,\label{eq:gauss_lin2}
\end{align}
and $c$ represents a constant (it impacts the overall magnitude only so we set $c\!=\!1$). We refer to \eqref{eq:gauss_lin2} (left) as the linearization of the RBF kernel and \eqref{eq:gauss_lin2} (right) as an RBF feature map{\color{red}\footnotemark[1]}. 
\end{proposition}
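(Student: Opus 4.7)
The plan is to verify the integral identity by a completing-the-square argument, and then get the discrete feature-map formula by viewing the pivot sum as a quadrature of the integral.

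First, substitute the definition $G_{\sigma/\sqrt{2}}(\vu)=\exp(-\enorm{\vu}^2/\sigma^2)$ into the integrand, so the product inside the integral in \eqref{eq:gauss_lin} becomes $\exp\!\bigl(-(\enorm{\vx-\vzeta}^2+\enorm{\vy-\vzeta}^2)/\sigma^2\bigr)$. The key manipulation is to expand the two squared norms in $\vzeta$ and complete the square around the midpoint $(\vx+\vy)/2$. A short computation gives
\begin{equation*}
\enorm{\vx-\vzeta}^2+\enorm{\vy-\vzeta}^2 \;=\; 2\,\bigl\|\vzeta-\tfrac{\vx+\vy}{2}\bigr\|_2^2 \;+\; \tfrac{1}{2}\enorm{\vx-\vy}^2 ,
\end{equation*}
which splits the integrand into a $\vzeta$-independent factor $G_\sigma(\vx-\vy)$ times a centered isotropic Gaussian in $\vzeta$ with effective variance $\sigma^2/4$.

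Next I would pull $G_\sigma(\vx-\vy)$ out of the integral and evaluate the remaining Gaussian integral using the standard identity $\int_{\mbr{d'}}\exp(-\enorm{\vu}^2/(2\tau^2))\,\mathrm{d}\vu=(2\pi\tau^2)^{d'/2}$ with $\tau^2=\sigma^2/4$. This yields a factor $(\pi\sigma^2/2)^{d'/2}$, which is precisely the reciprocal of the prefactor $(2/(\pi\sigma^2))^{d'/2}$ written in \eqref{eq:gauss_lin}, so the constants cancel and we recover $G_\sigma(\vx-\vy)$ exactly. This establishes the integral form.

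For the approximation in \eqref{eq:gauss_lin2}, I would then view the integral as an inner product in $L^2(\mbr{d'})$ between the two feature functions $\vzeta\mapsto G_{\sigma/\sqrt{2}}(\vx-\vzeta)$ and $\vzeta\mapsto G_{\sigma/\sqrt{2}}(\vy-\vzeta)$, and discretize over pivots $\vzeta_1,\dots,\vzeta_Z$ (a quadrature / sample-based approximation of the integral, as also used in the probability-product-kernel approach of \cite{jebara_prodkers}). The weight $c$ lumps the prefactor and the quadrature weight; since it only affects the overall scale of the feature map, setting $c=1$ is harmless and the resulting finite-dimensional map $\vvarphi(\vx)=[G_{\sigma/\sqrt{2}}(\vx-\vzeta_1),\dots,G_{\sigma/\sqrt{2}}(\vx-\vzeta_Z)]^T$ satisfies the stated inner-product identity up to the approximation error of the chosen quadrature.

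The only real obstacle is bookkeeping of the normalization: getting the exponents, the $\sigma/\sqrt{2}$ rescaling, and the Gaussian-integral constant to line up so the prefactor $(2/(\pi\sigma^2))^{d'/2}$ indeed cancels. The approximation step is essentially a definitional move once the integral identity is in hand, since any reasonable pivot placement yields a Riemann/Monte-Carlo sum whose convergence to the integral is standard.
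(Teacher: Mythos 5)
Your proof is correct and takes essentially the same route as the paper: the paper's entire proof is the one-line instruction to rewrite $G_{\sigma}$ as a probability product kernel following \cite{jebara_prodkers} (Sec.~3.1), and your completing-the-square identity plus the Gaussian integral $(\pi\sigma^2/2)^{d'/2}$ cancelling the prefactor $(2/(\pi\sigma^2))^{d'/2}$ is exactly the content of that cited identity, with the pivot-sum step treated in the same definitional way. The only difference is that you work out explicitly the details the paper delegates to the reference.
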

\begin{proof}
Rewrite the Gaussian kernel as the probability product kernel \cite{jebara_prodkers} (Sec. 3.1).
\end{proof}
\footnotetext[1]{\label{foot:maps}Note that (kernel) feature maps are not conv. CNN maps. They are two separate notions that  share the name.}

\section{Proposed Method}

Below, we formulate the problem of action recognition from sequences of 3D body skeleton joints, followed by our kernel formulation capturing actions, and its linearization into feature maps which we further feed to off-the-shelf CNN for classification.

\begin{figure}[t]
\centering
%
\centering\includegraphics[trim=0 0 0 0, clip=true, width=8.2cm]{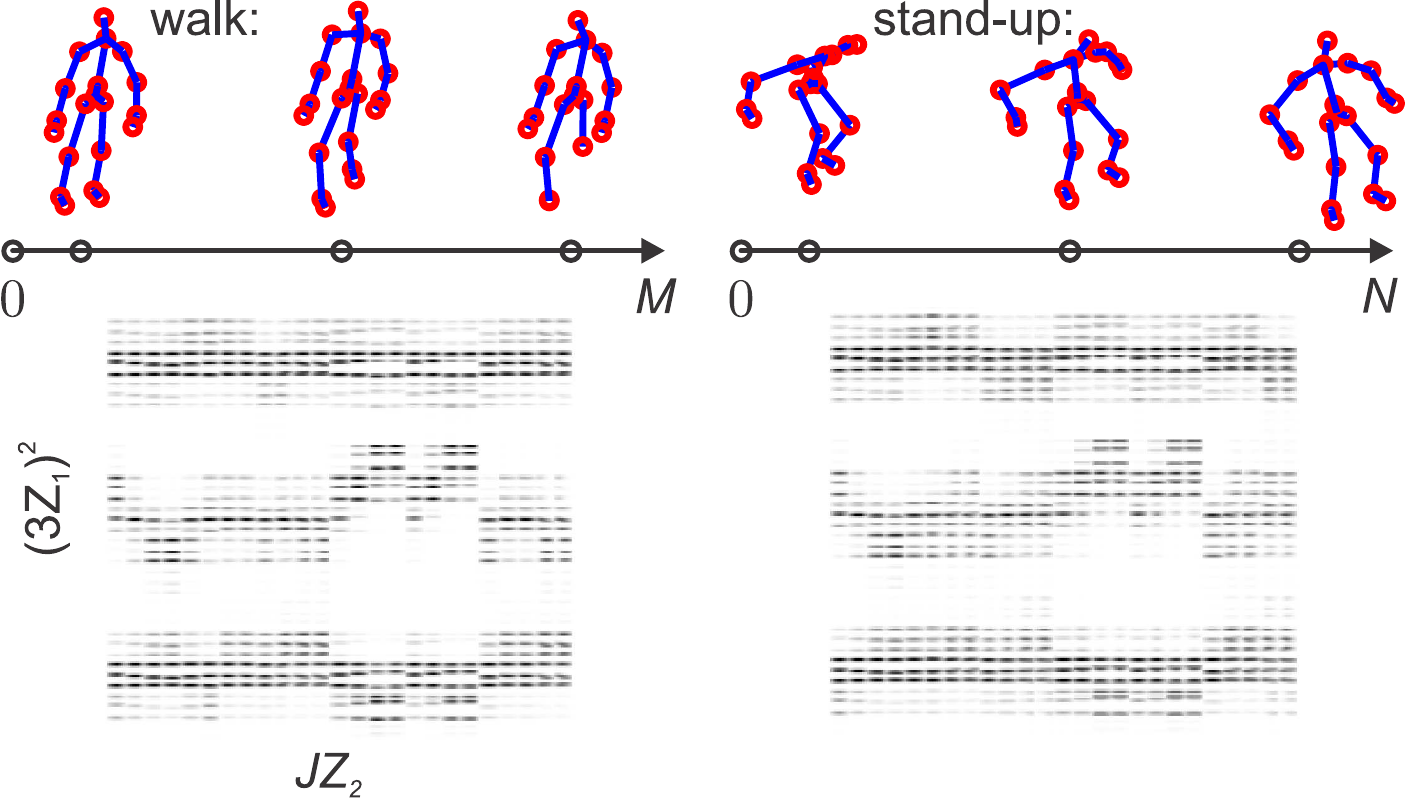}
%
%
\ifdefined\arxiv
\vspace{-0.0cm}
\else
\vspace{0.2cm}
\fi
\caption{Visualization of the feature maps of sequences of 3D body skeleton joints. Note that irrespectively of the sequence length, we always obtain $\mPhi\!\in\!\mbr{225\times 225}$ feature maps.
}\vspace{-0.3cm}
\label{fig:maps1}
\end{figure}

\subsection{Generation of Feature Maps via Kernel Linearization}
\label{sec:feat_maps}

Let dataset consist of sequences of $J$ 3D body skeleton joints describing human pose skeleton evolving in time. For brevity, we assume each sequence consists of $M$ frames. However, our formulation is applicable to sequences of variable lengths \eg,~$M$ and $N$. Our pose sequence $\Pi$ is defined as:
\begin{equation}
\Pi = \set{\vx_{is}\in\mbr{3},i\in\idx{J}, s\in\idx{M}}.
\end{equation}
Each sequence $\Pi$ is described by one of $C$ action labels. We use the sequence $\Pi$ to generate a feature map which can be considered a descriptor of action associated with  $\Pi$. Then, such feature maps are generated from given datasets and then fed to the source and target CNN streams with the goal of performing the supervised domain adaptation. Figure \ref{fig:maps1} illustrates the sequences and feature maps obtained as a result of the process detailed next.

In what follows, we want to measure the similarity between any two action sequences in terms of their 3D body skeleton joints as well as their evolution in time. We normalize each skeleton w.r.t. the chest joint (chosen to be the center). Moreover, we normalize such relative coordinates by their total variance computed over the training data.  Let $\Pi_A$ and $\Pi_B$ be two sequences, each with $J$ joints, and $M$ and $N$ frames, respectively. Further, let $\vx_{is}\!\in\!\mbr{3}$ and $\vy_{jt}\!\in\!\mbr{3}$ correspond to coordinates of joints of body skeletons of $\Pi_A$ and $\Pi_B$, respectively. We define our~\emph{sequence kernel} (SCK) between $\Pi_A$ and $\Pi_B$ as:
\begin{align}
& K(\Pi_A,\Pi_B) = \frac{1}{MN}\!\!\! \sum\limits_{i\in\idx{J}} \sum\limits_{s\in\idx{M}}\sum\limits_{t\in\idx{N}}\!K_{\sigma_1}\!\left(\vx_{is} - \vy_{it}\right)^2 G_{\sigma_2}(\frac{s}{M}-\frac{t}{N}),\label{eq:ker1}
\end{align}
where $1/(MN)$ is a normalization constant, and $G_{\sigma_1}$ and $G_{\sigma_2}$ are subkernels that capture the similarity between the 3D body skeleton joints and temporal alignment, respectively. Therefore, we have two parameters $\sigma_1$ and $\sigma_2$ which control the level of tolerated invariance w.r.t. misalignment of 3D body joints and their temporal positions in two sequences, respectively. Moreover, the square of $K_{\sigma_1}$ in Eq. \eqref{eq:ker1} captures co-occurrences of x, y, and z Cartesian coordinates of each 3D body joint--it is shown below that the square operation corresponds to the Kronecker product which is known to capture co-occurrences.

First, we define $K_{\sigma_1}(\vx-\vy)\!=\!\sum_{i\in\idx{3}}G_{\sigma_1}(x^i-y^i)$ where superscript $i$ chooses x-, y-, or z-axis of a 3D coordinate vector. Next, we linearize the above kernel using the theory from Section \ref{sec:kernel_linearization} so that $K_{\sigma_1}(\vx-\vy)\!\approx\!\sum_{i\in\idx{3}}\phi(x^i)^T\phi(y^i)$, which gives the dot-product of concatenations $K_{\sigma_1}(\vx-\vy)\!\approx\![\phi(x^1); \phi(x^2); \phi(x^3)]^T[\phi(y^1); \phi(y^2); \phi(y^3)]$. In what follows, we write for simplicity that $K_{\sigma_1}(\vx-\vy)\!\approx\!\phi(\vx)^T\phi(\vy)$. Moreover, temporal kernel $G_{\sigma_2}(\frac{s}{M}-\frac{t}{N})\approx\vz(s/M)^T\vz(t/N)$. The above linearizations combined with Eq. \eqref{eq:ker1} lead to:
\begin{align}
&K(\Pi_A,\Pi_B)\!\approx\!\frac{1}{MN}\!\!\sum\limits_{i\in\idx{J}}\sum\limits_{s\in\idx{M}}\!\sum\limits_{t\in\idx{N}}\!\!\!\big(
\vphi(\vx_{is})^T\vphi(\vy_{it})\big)^2\vz(s/M)^T\vz(t/N),
\label{eq:ker1b}
\end{align}
which can be further rewritten into Eq. \eqref{eq:ker1c} and simplified by Eq. \eqref{eq:ker1d}:
\begin{align}
&\!\!K(\Pi_A,\Pi_B)\!\approx\!\frac{1}{MN}\!\!\sum\limits_{i\in\idx{J}}\sum\limits_{s\in\idx{M}}\!\sum\limits_{t\in\idx{N}}\!\!\!\left<
(\vphi(\vx_{is})\!\otimes\!\vphi(\vx_{is}))\vz(s/M)^T\!\!,\,
(\vphi(\vy_{it})\!\otimes\!\vphi(\vy_{it}))\vz(t/N)^T\right>\label{eq:ker1c}\\
&\!\!\qquad\qquad\;=\sum\limits_{i\in\idx{J}}\!\!\Big<
\!\frac{1}{M}\!\!\sum\limits_{s\in\idx{M}}\!\!(\vphi(\vx_{is})\!\otimes\!\vphi(\vx_{is}))\vz(s/M)^T\!\!,
\frac{1}{N}\!\!\sum\limits_{t\in\idx{N}}\!\!(\vphi(\vy_{it})\!\otimes\!\vphi(\vy_{it}))\vz(t/N)^T\Big> \;\Rightarrow\nonumber\\
%
%
&\!\!K(\Pi_A,\Pi_B)\!\approx\!\left<\mPhi(\Pi_A), \mPhi(\Pi_B)\right>,\;\text{where}\label{eq:ker1d}\\
&\!\!\mPhi(\Pi_A)\!=\!\Big[\frac{1}{M}\!\!\sum\limits_{s\in\idx{M}}\!\!(\vphi(\vx_{is})\!\otimes\!\vphi(\vx_{is}))\vz(s/M)^T\Big]_{i\in\idx{J}}\!, 
\mPhi(\Pi_B)\!=\!\Big[\frac{1}{N}\!\!\sum\limits_{t\in\idx{N}}\!\!(\vphi(\vy_{it})\!\otimes\!\vphi(\vy_{it}))\vz(t/N)^T\Big]_{i\in\idx{J}}\!,\nonumber
\end{align}
and $\mPhi(\Pi)$ is our texture-like feat. map for a chosen sequence $\Pi$.

We choose $Z_1\!=\!5$ pivots pivots $\vzeta\!=\![\zeta_1,\cdots,\zeta_{Z_1}]^T$ for $G_{\sigma_1}$ which are sampled on interval $[-1; 1]$ with equal steps \eg, $\vzeta\!=\![-1:2/(Z_1\!-\!1):1]$. This results in a $3Z_1$ dimensional map that approximates $K_{\sigma_1}$. For $G_{\sigma_2}$, we choose such an integer number of pivots $Z_2$ that $Z_2J\!=\!225$. We sample these pivots on interval $[0; 1]$. This way, we obtain $\mPhi\!\in\!\mbr{Z_1^2\times Z_2J}$ which can be readily fed to an off-the-shelf CNN stream. Figure \ref{fig:maps2} demonstrates the impact of $\sigma_1$ and $\sigma_2$ radii on the feature maps $\mPhi$.
%
%
%
%
%
%
%
\newcommand{\SrcImgWH}{2.0cm}
\begin{figure}[t]
\centering
%
\begin{subfigure}[b]{0.328\linewidth}
\centering
\includegraphics[trim=0 0 0 0, clip=true, width=\SrcImgWH, height=\SrcImgWH]{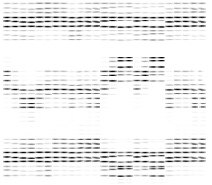}
\includegraphics[trim=55 55 70 23, clip=true, width=\SrcImgWH, height=\SrcImgWH]{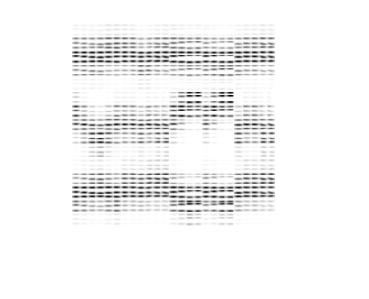}\\
\includegraphics[trim=55 55 70 23, clip=true, width=\SrcImgWH, height=\SrcImgWH]{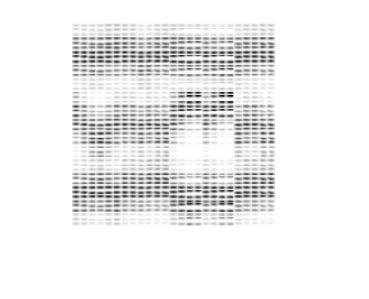}
\includegraphics[trim=55 55 70 23, clip=true, width=\SrcImgWH, height=\SrcImgWH]{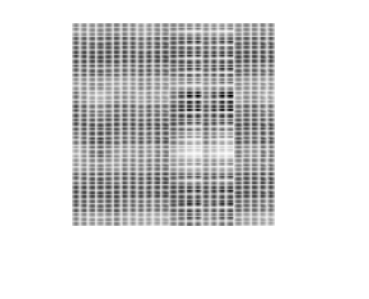}
\vspace{-0.1cm}
\caption{\label{fig:maps2a}}
\end{subfigure}
\begin{subfigure}[b]{0.328\linewidth}
\centering
\includegraphics[trim=55 55 70 23, clip=true, width=\SrcImgWH, height=\SrcImgWH]{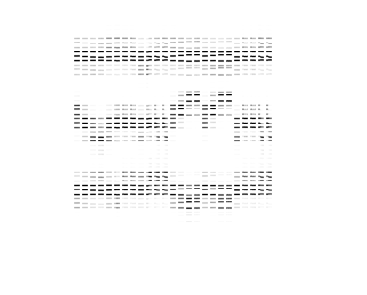}
\includegraphics[trim=55 55 70 23, clip=true, width=\SrcImgWH, height=\SrcImgWH]{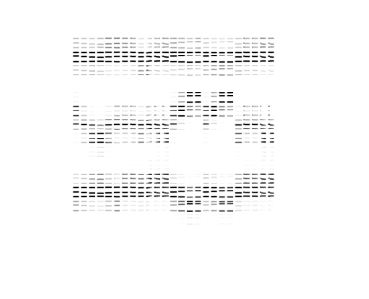}\\
\includegraphics[trim=55 55 70 23, clip=true, width=\SrcImgWH, height=\SrcImgWH]{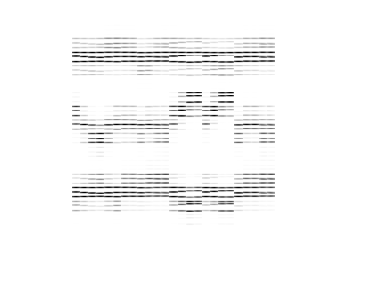}
\includegraphics[trim=55 55 70 23, clip=true, width=\SrcImgWH, height=\SrcImgWH]{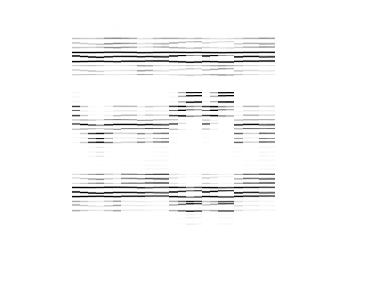}
\vspace{-0.1cm}
\caption{\label{fig:maps2b}}
\end{subfigure}
\begin{subfigure}[b]{0.328\linewidth}
\centering
\includegraphics[trim=0 0 0 0, clip=true, width=\SrcImgWH, height=\SrcImgWH]{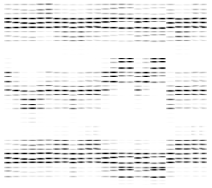}
\includegraphics[trim=0 0 0 0, clip=true, width=\SrcImgWH, height=\SrcImgWH]{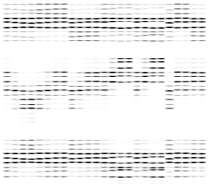}\\
\includegraphics[trim=0 0 0 0, clip=true, width=\SrcImgWH, height=\SrcImgWH]{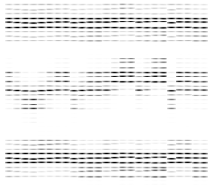}
\includegraphics[trim=0 0 0 0, clip=true, width=\SrcImgWH, height=\SrcImgWH]{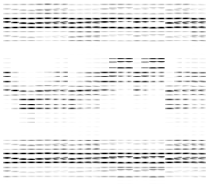}
\vspace{-0.1cm}
\caption{\label{fig:maps2c}}
\end{subfigure}
\ifdefined\arxiv
\vspace{-0.3cm}
\else
\vspace{0.2cm}
\fi
\caption{Illustration of the impact of $\sigma_1\!=\!0.4,0.6,0.8,1.5$ and $\sigma_2\!=\!0.02, 0.1, 1.0, 5.0$ (in the scanline order) on feature maps are given in Figures \ref{fig:maps2a} and \ref{fig:maps2b}, respectively. Figure \ref{fig:maps2c} shows four different maps for four different sequences. Note the subtle differences.
}\vspace{-0.3cm}
\label{fig:maps2}
\end{figure}
\begin{figure*}[b]
\centering
\includegraphics[trim=0 0 0 0, clip=true, width=12cm]{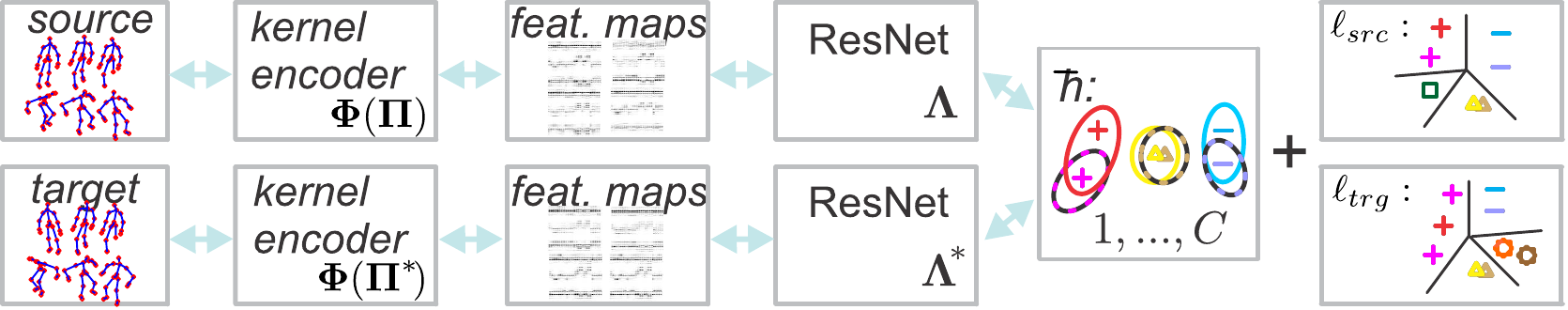}
\vspace{0.2cm}
\caption{Our pipeline: combining the 3D body skeleton encoding and the supervised domain adaptation. Unlike \cite{me_domain}, we utilize two classifiers (one per network stream) and perform alignment between the classes that are shared between the source and target datasets.}\vspace{-0.3cm}
\label{fig:our_pipe}
\end{figure*}
Our feature map is similar in spirit to Convolutional Kernel Networks \cite{ckn} for image classification which demonstrated that the linearization of a carefully designed kernel adheres to standard CNN operations such as convolution, non-linearity and pooling. This motivates our belief that our feature maps are more suited/compatible for interfacing with CNNs than ad-hoc texture-like representations \cite{new_skeleton}.

\subsection{Alignment of Second-order Statistics}

Our final pipeline is illustrated in Figure \ref{fig:our_pipe}. As our ultimate goal is to transfer knowledge between Kinect-based datasets, we combine the described in Section \ref{sec:feat_maps} encoder of sequences of 3D body skeleton joints together with the supervised domain adaptation algorithm {\em So-HoT} \cite{me_domain}. Section {\color{cyan}1} (supplementary material) details this algorithm. 
{\em So-HoT} yields state-of-the-art results on the Office dataset \cite{saenko2010adapting}, however, it works with datasets which are described by the same class concepts. 
Thus, we adapt their algorithm to our particular needs \eg, we only perform the alignment of second-order statistics between the classes that are shared between the source and target datasets. Moreover, we employ separate classifier losses $\ell_{src}$ and $\ell_{trg}$ for the source and target stream, respectively. The separate target classifier allows the target network to work with class labels absent from the source dataset. At the test time, we cut off the source stream (and the source classifier), as illustrated in Figure \ref{fig:cnn2}.

Algorithm {\color{cyan}1} (supplementary material) details how we perform domain adaptation. We enable the alignment loss $\hbar$ only if the source and target batches correspond to the same class. Otherwise, the alignment loss is disabled and the total loss uses only the classification log-losses $\ell_{src}$ and $\ell_{trg}$. To generate the source and target batches that match w.r.t. the class label, we re-order source and target datasets class-by-class and thus each source/target batch contains only one class label at a time. Once all source and target datapoints with matching class labels are processed, remaining datapoints are processed next. Lastly, we refer readers interested in the details of the So-HoT algorithm to paper \cite{me_domain} for specifics of the $\hbar$ loss.

\section{Experiments}
\label{sec:exp}


Below, we detail our network setting, datasets and we show experiments on our feature maps for sequences of 3D body skeleton joints in the context of the supervised domain adaptation. 

\vspace{0.05cm}
\noindent{\bf{Network Model.}} 
%
We use the two streams network architecture from \cite{me_domain}. For each CNN stream, we chose the Residual CNN model ResNet-50 \cite{he_residual} pre-trained on ImageNet dataset \cite{krizhevsky2012imagenet} for both source and target streams. The {\em Pool-5} layers of the source and target streams are forwarded to a fully connected layer {\em FC} with 512 hidden units and this is forwarded to both the classification weight layer and the so-called alignment loss \cite{me_domain}. Two classifiers are used for the source and target streams. Moreover, the alignment loss is activated when the generated source and target mini-batches contain datapoints with the same class labels. See Algorithm {\color{cyan}1} (supplementary material) for more details and Figure \ref{fig:our_pipe} for the network setting.

The training is performed by the Stochastic Gradient Descent (SGD) with the momentum set to 0.9. Mini-batch sizes differ depending on both the source and target dataset. 

\vspace{0.1cm}
\noindent{\bf{Datasets.}} 
%
We use the NTU RGB-D, SBUKinect Interaction and UTKinect-Action3D datasets.

\vspace{0.1cm} 
\noindent{\em NTU RGB-D} \cite{shahroudy2016ntu}, the largest action recognition dataset to date, contains $\sim$56000 sequences of 60 distinct action classes and sequences of actions/interactions performed by 40 different subjects. 3D coordinates of 25 body joints are provided. We use the cross-subject evaluation protocol \cite{shahroudy2016ntu} and used only the train split as our source data. For pre-processing, we translated 3D body joints by the joint-2 (middle of the spine) and we chose the body with the largest 3D motion as the main actor for the multi-actor sequences.




\vspace{0.1cm}
\noindent{\em{SBUKinect}} \cite{yun2012two} contains videos of 8 interaction categories between two people, and 282 skeleton 
 sequences with 15 3D body joints. 
Although the locations of body joints are noisy \cite{yun2012two} and pre-processing is common \cite{zhu2016co}, we do not perform any pre-processing or data augmentation in contrast to \cite{new_skeleton}. 
In domain adaptation setting, we use the NTU training set as the source and SBU as the target data. 
For evaluation, we follow \cite{yun2012two} and use 5-fold cross-validation on the given splits. As each sequence contains 2 persons, we used each skeleton as a separate training datapoint. For testing, we averaged predictions over such pairs. 

\vspace{0.1cm}
\noindent{\em{UTKinect-Action3D}} \cite{xia2012view} contains 10 action captured by Kinect, 
199 sequences, and 20 3D body skeleton joints. 
\begin{table}[t]
\parbox{0.525\textwidth}{
\renewcommand{\arraystretch}{0.8}{
\setlength{\tabcolsep}{0.25em}
\fontsize{9}{9.5}\selectfont
\begin{tabular}{c|c|c}
\hline
Methods&SBU&UTK\\
\hline
Cylindrical textures, $1\!\times$CNN \cite{new_skeleton}&$89.37\%$&$95.0\%$\\
Cylindrical textures, $3\!\times$CNN \cite{new_skeleton}&$90.24\%$&$95.9\%$\\ 
Kernel feature maps, $1\!\times$CNN (ours)&$\bf91.13\%$&$\bf96.5\%$\\ 
\hline
\end{tabular}
}
\vspace{0.2cm}
\caption{Comparisons of texture representations.}
\label{tab:text}
}
\hspace{0.1cm}
%
%
\parbox{0.46\textwidth}{
%
%
\vspace{-0.1cm}
\centering
\begin{subfigure}[b]{0.42\linewidth}
\centering
\includegraphics[trim=0 0 0 0, clip=true, width=2.25cm, height=2cm]{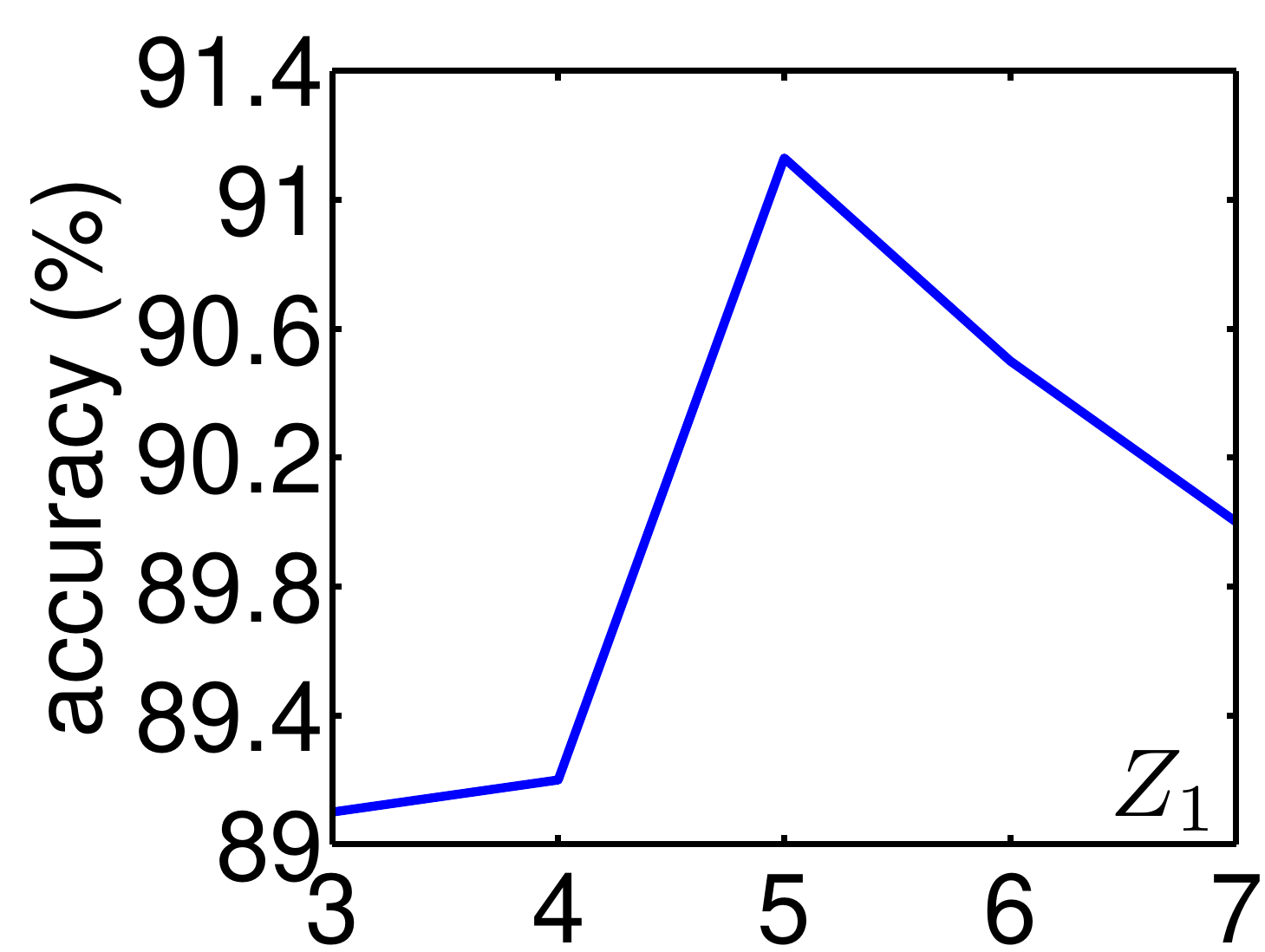}
\ifdefined\arxiv
\else
\vspace{-0.3cm}
\fi
\caption{\label{fig:z1a}}
\end{subfigure}
\begin{subfigure}[b]{0.42\linewidth}
\centering
\includegraphics[trim=0 0 0 0, clip=true, width=2.25cm, height=2cm]{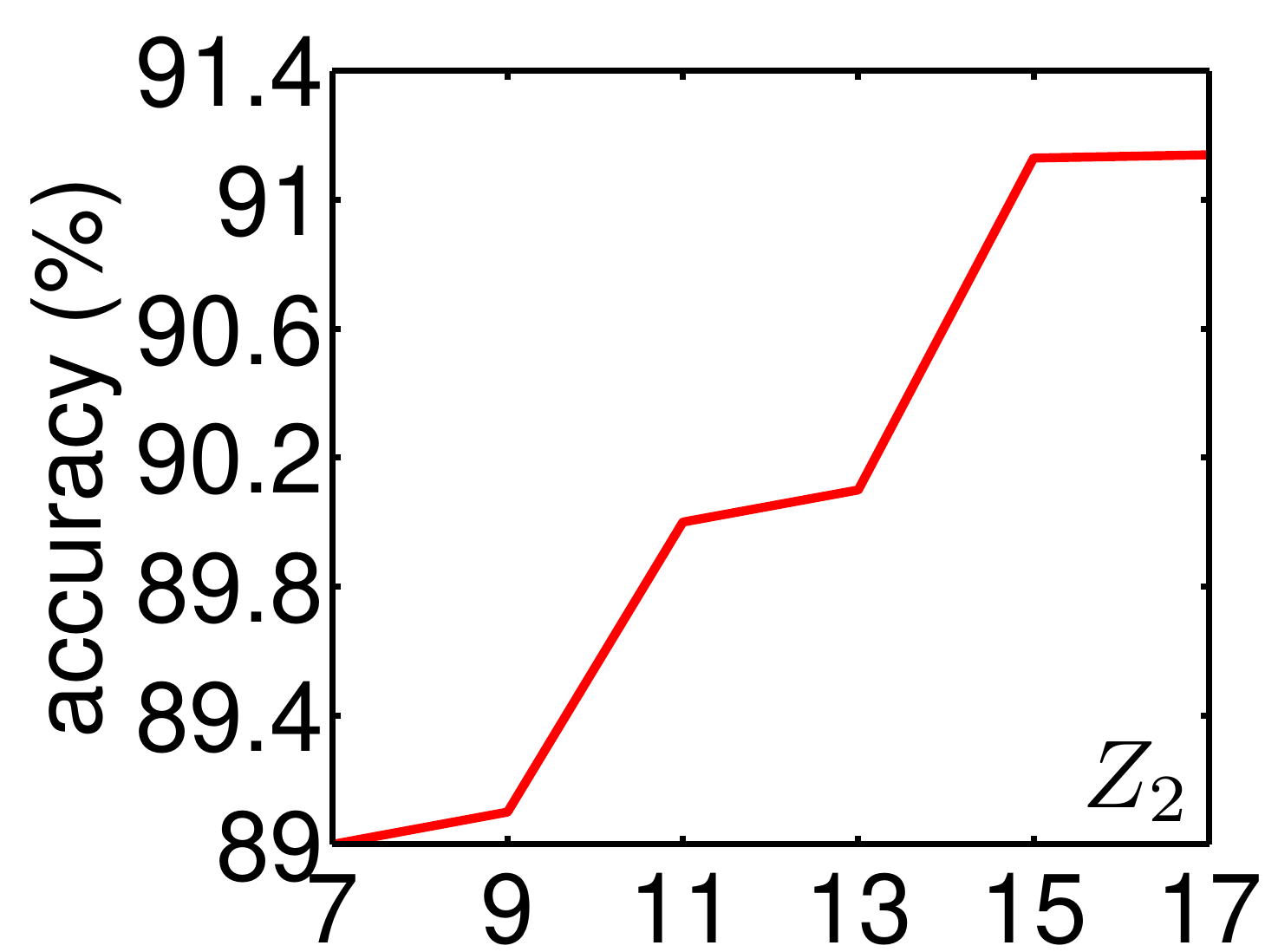}
\ifdefined\arxiv
\else
\vspace{-0.3cm}
\fi
\caption{\label{fig:z2b}}
\end{subfigure}
\ifdefined\arxiv
\vspace{-0.1cm}
\else
\vspace{0.1cm}
\fi
\captionof{figure}{Accuracy w.r.t. $Z_1$ ($Z_2\!=\!15$) and $Z_2$ ($Z_1\!=\!5$) on SBU in Figures \ref{fig:z1a} and \ref{fig:z2b}.
}
\label{fig:params_z}
%
%
%
}
\vspace{-0.5cm}
\end{table}
%
We avoid data augmentation or pre-processing. Protocol \cite{zhu2013fusing} has 2 splits: half of the subjects for training and half for testing. NTU training set is our source.


\vspace{0.1cm}
\noindent{\bf{Experiments.}} 
%
%
Below, we focus on the following types of experiments, each utilizing our encoder which transforms sequences of 3D body skeleton joints into feature maps:

\begin{enumerate}
\item Target-only: only target dataset is used for training and testing (no domain adaptation). 
\item Source+target: the source and target datasets (both training and validation splits) are combined into one larger dataset. Testing is performed on the target testing set only. No domain adaptation is used but the network is trained on both domains. 
\item Second-order alignment: our extended So-HoT model applies the domain adaptation between the source and target training datapoints. We perform the alignment of second-order statistics whenever the source and target class names match.
\end{enumerate}

\begin{table}[b]
\vspace{-0.3cm}
\hspace{-0.1cm}
	\parbox{0.5\textwidth}{
	\renewcommand{\arraystretch}{0.8}{
	\setlength{\tabcolsep}{0.25em}
	\fontsize{9}{9.5}\selectfont
  \begin{tabular}{c|c}
		\hline
    Methods&Accuracy\\
		\hline
     Raw Skeleton \cite{yun2012two}& $49.7\%$\\
     Hierarchical RNN \cite{du2015hierarchical}& $80.35\%$\\
     Deep LSTM \cite{zhu2016co} & $86.03\%$\\
     Deep LSTM $+$ Co-occurrence \cite{zhu2016co}& $90.41\%$\\
     ST-LSTM \cite{liu2016spatio}&$88.6\%$\\
     ST-LSTM $+$ Trust Gate \cite{liu2016spatio}&$93.3\%$\\
     Frames $+$ CNN \cite{new_skeleton}&$90.8\%$\\
     Clips $+$ CNN $+$ MTLN \cite{new_skeleton}&$93.57\%$\\\hline   
     SBU  only ({\em target})&91.13\%\\
     NTU+SBU combined ({\em source+target})&91.52\%\\
     Second-order alignment&\textbf{94.36\%}\\
	\hline
\end{tabular}
}
\vspace{0.2cm}
\caption{Results on the SBUKinect dataset.}\label{tab:sbu}
}
\parbox{0.5\textwidth}{
\renewcommand{\arraystretch}{0.8}{
\setlength{\tabcolsep}{0.25em}
\fontsize{9}{9.5}\selectfont
  \begin{tabular}{c|c}
		\hline
    Methods&Accuracy\\
		\hline
     3D Histogram (leave one out) \cite{xia2012view}& $90.92\%$\\
     Lie Group \cite{vemulapalli_SE3}& $97.08\%$\\
     SCK $+$ DCK \cite{tensor_eccv} & $98.39\%$\\
     Skeleton Joint Features \cite{zhu2013fusing}& $90.9\%$\\
     ST-LSTM $+$ Trust Gate \cite{liu2016spatio}&$95.0\%$\\
     Elastic Functional Coding \cite{anirudh2015elastic}&$94.9\%$\\\hline   
     UTK only ({\em target})&$96.5\%$\\
     NTU+UTK combined ({\em source+target})&$97.5\%$\\
     Second-order alignment &\textbf{98.9\%}\\
	\hline
\end{tabular}
}
\vspace{0.2cm}
  \caption{Results on the UTKinect dataset.}\label{tab:utk}
}
\vspace{-0.3cm}
\end{table}


\begin{table}
\parbox{0.62\textwidth}{
\renewcommand{\arraystretch}{0.8}{
\setlength{\tabcolsep}{0.25em}
\fontsize{9}{9.5}\selectfont
  \begin{tabular}{c|c}
		\hline
    Methods&Cross subject\\
		\hline
     Hierarchical RNN \cite{du2015hierarchical}& $59.1\%$\\
     Deep RNN \cite{shahroudy2016ntu}& $59.3\%$\\
     Deep LSTM \cite{shahroudy2016ntu}& $60.7\%$\\
     ST-LSTM $+$ Trust Gate \cite{liu2016spatio}&$69.2\%$\\
     Frames $+$ CNN \cite{new_skeleton}&${\bf 75.73}\%$\\\hline    
     NTU only ({\em target})&$74.52\%$\\
		 NTU+UTK+SBU combined ({\em source+target})&$74.65\%$\\
     Second-order alignment (UTK$\rightarrow$NTU)&$74.91\%$\\
     Second-order alignment (SBU$\rightarrow$NTU)&$74.83\%$\\
     Second-order alignment (UTK$+$SBU$\rightarrow$NTU)&${\bf 75.35\%}$\\
	\hline
\end{tabular}
}
\vspace{0.2cm}
\caption{Results on the NTU dataset.}\label{tab:ntu}
}
\vspace{0.1cm}
\parbox{0.375\textwidth}{
%
%
\vspace{-0.3cm}
\begin{subfigure}[b]{0.49\linewidth}
\centering
\includegraphics[trim=0 0 0 0, clip=true, width=2.25cm, height=2cm]{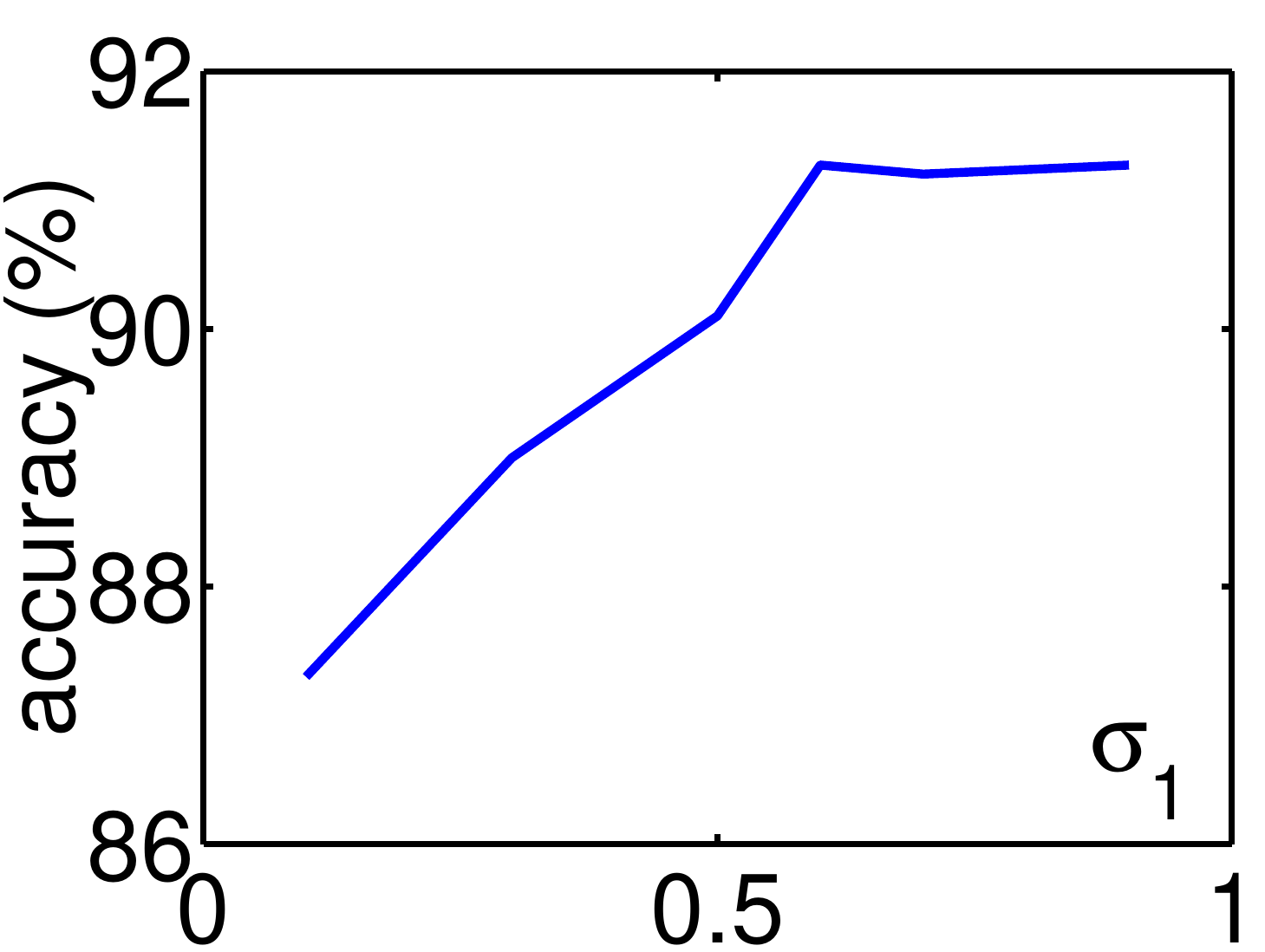}
\ifdefined\arxiv
\else
\vspace{-0.3cm}
\fi
\caption{\label{fig:sig1a}}
\end{subfigure}
\begin{subfigure}[b]{0.49\linewidth}
\centering
\includegraphics[trim=0 0 0 0, clip=true, width=2.25cm, height=2cm]{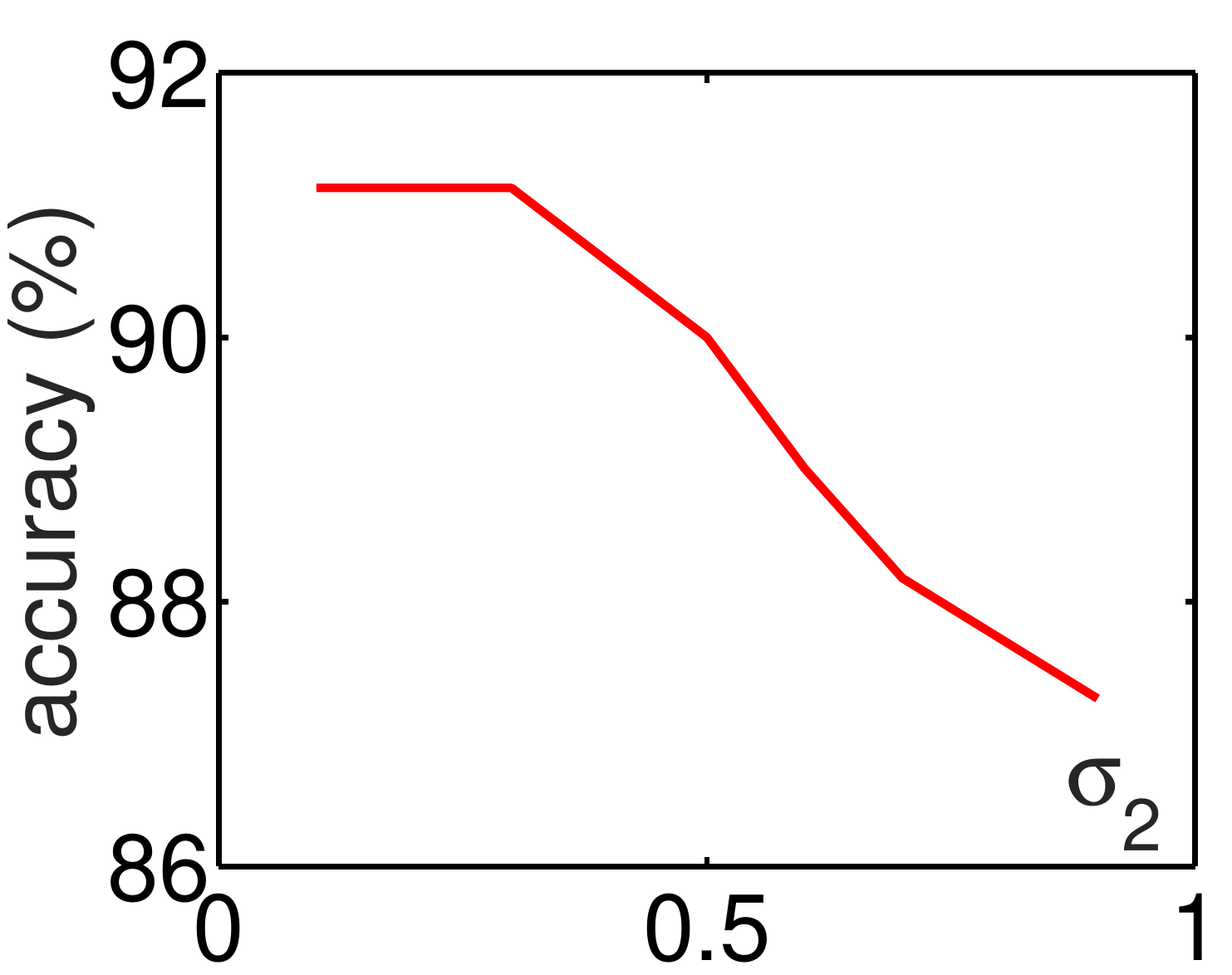}
\ifdefined\arxiv
\else
\vspace{-0.3cm}
\fi
\caption{\label{fig:sig2b}}
\end{subfigure}
\ifdefined\arxiv
\vspace{-0.45cm}
\else
\vspace{-0.25cm}
\fi
\captionof{figure}{Sensitivity w.r.t. param. $\sigma_1$ and $\sigma_2$ on SBU. Figures \ref{fig:sig1a} and \ref{fig:sig2b} show the accuracy w.r.t. $\sigma_1$ ($\sigma_2\!=\!0.3$) and $\sigma_2$ ($\sigma_1\!=\!0.6$), resp.
}\vspace{-0.3cm}
\label{fig:params}
%
%
%
}
\vspace{-0.5cm}
\end{table}

\vspace{-0.2cm}
\noindent{\bf{No Domain Adaptation.}} 
Firstly, we compare our encoding to texture-based representation \cite{new_skeleton}. Approach \cite{new_skeleton} forms 4 arrays of cylindrical coordinates of 3D skeleton body joints, each translated w.r.t. each 4 pre-defined key-joints. Such arrays are later resized, cropped \etc. and fed to network via multiple CNN inputs. They require a dedicated CNN pipeline which combines all these arrays. To make a fairer comparison to our encoding and use an off-the-shelf CNN setting, we simplified  representation \cite{new_skeleton} to use only a single body key-joint center for translation. We use the same setting for our encoding and \cite{new_skeleton} based on ResNet-50. We do not use a domain adaptation for results in Table \ref{tab:text}. We include however a variant of method \cite{new_skeleton} which generates 3 texture images (one per each cylindrical coordinate). Thus, these 3 texture images are passed via 3 CNN streams and their {\em FC} vectors are concatenated. 

Table \ref{tab:text} shows the comparison of our texture-like feature map encoding against method \cite{new_skeleton}. With $3\!\times$ more texture images taking $3\!\times$ more time to process via 3 CNN streams, method ({\em Cylindrical textures, $3\!\times$CNN}) \cite{new_skeleton} performs $\sim$0.7--0.9\% worse than ours. Moreover, for fairness, we next combine their 3 texture images (one per each cylindrical coordinate) into an RGB-like texture and passed via 1 CNN stream ({\em Cylindrical textures, $1\!\times$CNN}). Table \ref{tab:text} shows that given the same ResNet-50 pipeline, our method outperforms theirs by $\sim$1.8\% and 1.4\% on SBU and UTK. Figures \ref{fig:params_z} and \ref{fig:params} show that our encoder is not too sensitive w.r.t. the choice of $Z_1$, $Z_2$, $\sigma_1$ and $\sigma_2$ on the SBU dataset (no domain adaptation). Figure {\color{cyan}1} (supplementary material) shows a similar analysis on the UTK dataset.

Although idea \cite{new_skeleton} appears somewhat related to ours, the inner workings of both methods differ \eg, our method is mathematically inspired to attain desired shift-invariance w.r.t. 3D positions of coordinates and the temporal domain. In contrast, approach \cite{new_skeleton} 
is hand-crafted.

\vspace{0.1cm}
\noindent{\bf{Domain Adaptation Setting.}} 
Having shown that our encoder 
outperforms \cite{new_skeleton} given the same pipeline, we discuss below results on the supervised domain adaptation pipeline.

In Table \ref{tab:sbu}, we compare our method against state-of-the-art results on the SBU dataset. After enabling the domain adaptation algorithm ({\em second-order alignment}), the accuracy increases by \textbf{3.23\%} over training on the target data only ({\em target}). Our method also outperforms naive training on the combined source and target data ({\em source+target}) by \textbf{1.84\%}. 
We note that without any data augmentation, our method outperforms more complicated approaches which utilize numerous texture-like representations per sequence combined with several CNN streams and a fusion network ({\em Clips+CNN+MTLN}) \cite{new_skeleton}. This shows the effectiveness of our supervised domain adaptation on sequences of 3D body skeleton joints. 

Table \ref{tab:utk} shows on the UTK dataset that domain adaptation  ({\em second-order alignment}) outperforms the baseline ({\em target}) and the naive fusion ({\em source+target}) by \textbf{2.4\%} and \textbf{1.4\%}. 

Table \ref{tab:ntu} presents the transfer results from UTK and/or SBU to NTU. Transferring the knowledge from small- to large-scale datasets is a difficult task. However, by combining UTK and SBU to form a source dataset, we were able to still gain 0.8\% improvement over the baseline ({\em target}). We obtain results similar to \cite{new_skeleton} with a much simpler pipeline.



\vspace{-0.25cm}
\section{Conclusions}

In this paper, we have demonstrated that sequences of 3D body skeleton joints can be easily encoded with the use of appropriately designed kernel function. A linearization of such a kernel function produces texture-like feature maps which constitute a first feed-forward layer further interconnected with off-the-shelf CNNs. Moreover, we have also demonstrated that the supervised domain adaptation can be performed on such representations and that small-scale Kinect-based datasets can benefit from the knowledge transfer from the large-scale NTU dataset. We believe our contributions lead to state-of-the-art results. They also open up interesting avenues on how to use the time sequences with traditional off-the-shelf CNNs and how to leverage the abundance of the skeleton-based action recognition datasets.

\begin{appendices}

\section{Supervised Domain Adaptation \cite{me_domain}}
\label{sec:dom_adapt}

For the full details of the {\em So-HoT} algorithm, please refer to paper \cite{me_domain}. Below, we review the core part of their algorithm for the reader's convenience.
Suppose $\idx{N}$ and $\idx{N^*}\!$ are the indexes of $N$ source and $N^*\!$ target training data points. $\idx{N_c}$ and $\idx{N_c^*}\!$ are the class-specific indexes for $c\!\in\!\idx{C}$, where $C$ is the number of classes. Furthermore, suppose 
we have feature vectors from an {\em FC} layer 
of the source network stream, one per an action sequence or image, and their associated labels. Such pairs are given by $\mLam\!\equiv\!\{(\vphi_n, y_n)\}_{n\in\idx{N}}$, where $\vphi_n\!\in\!\mbr{d}$ and $y_n\!\in\!\idx{C}$, $\forall n\!\in\!\idx{N}$. 
For the target data, by analogy, we define pairs $\mLam^{*\!}\!\equiv\!\{(\vphi^*_n, y^*_n)\}_{n\in\idx{N}^*}$, where $\vphi^*\!\!\in\!\mbr{d}$ and $y^*_n\!\!\in\!\idx{C}$, $\forall n\!\in\!\idx{N}^*$. 
Class-specific sets of feature vectors are given as $\mPhi_c\!\equiv\!\{\vphi^c_n\}_{n\in\idx{N_c}}$ and $\mPhi_c^*\!\!\equiv\!\{\vphi^{*c}_n\}_{n\in\idx{N_c^*\!}}$, $\forall c\!\in\!\idx{C}$. Then $\mPhi\!\equiv\!(\mPhi_1,\cdots,\mPhi_C)$ and $\mPhi^*\!\!\equiv\!(\mPhi^*_1,\cdots,\mPhi^*_C)$.
 The asterisk in superscript (\eg~${\vphi}^*$) denotes variables related to the target network while the source-related variables have no asterisk. 
\ifdefined\arxiv
Figure \ref{fig:our_pipe}
\else
Figure {\color{cyan}4} (the main submission) 
\fi
shows the setup we use. The {\em So-HoT} problem is posed as a trade-off between the classifier and alignment losses $\ell$ and $\hbar$:
\vspace{0.00cm}
\begin{align}
\hspace{0.2cm}
&\!\!\!\!\!\!\!\!\!\!\!\!\!\!\!\!\!\!\argmin\limits_{\;\;\substack{\mW\!,\mW^*\!\!\!\!,\mP,\mP^*\!\!\comment{ ,\vec{\zeta},\vec{\overbartwo{\zeta}} }\\\;\;\;\;\text{s. t. }||\vphi_n||_2^2\leq\tau,\\\;\;\;\;\;\;\;\;\,||\vphi^*_{n'}||_2^2\leq\tau,\\\;\;\;\;\forall n\in\idx{N}\!, n'\!\in\idx{N}^*}} 
\ell\!\left(\mW\!,\mLam\right)\!+\!\ell\!\left(\mW^{*\!}\!,\mLam^{*\!}\right)\!+\!\eta||\mW\!-\!\mW^{*\!}||_F^2\;+\label{eq:main_obj1}\\[-35pt]
%
&\qquad\qquad\quad\!\!\!\!\!\underbrace{\frac{\alpha_1}{C}\!\!\sum_{c\in\idx{C}}\!\comment{ \zeta_{c} }||\cov_c\!-\!\cov^*_c||_F^2
\!+\!\!\frac{\alpha_2}{C}\!\!\sum_{c\in\idx{C}}\!\comment{ \overbartwo{\zeta}_c}||\vmu_c\!\!-\!\!\vmu_c^*||_2^2.
}_{\hbar(\mPhi,\mPhi^*\!)}\nonumber
%
%
\vspace{0.2cm}
\end{align}
%
For $\ell$,  a generic Softmax loss is employed. For the source and target streams, the matrices $\mW,\mW^*\!\!\in\!\mbr{d\times C}$ contain unnormalized probabilities. 
In Equation \eqref{eq:main_obj1}, separating the class-specific distributions is addressed by $\ell$ while attracting the within-class scatters of both network streams is handled by $\hbar$. Variable 
$\eta$ controls the proximity between $\mW$ and $\mW^*\!$ which encourages the similarity between decision boundaries of classifiers.

The loss $\hbar$ depends on two sets of variables $(\mPhi_1,\cdots,\mPhi_C)$ and $(\mPhi^*_1,\cdots,\mPhi^*_C)$ -- one set per network stream. Feature vectors $\mPhi(\mP)$ and $\mPhi^*\!(\mP^*\!)$ depend on the parameters of the source and target network streams $\mP$ and $\mP^*\!$ that we optimize over.
%
$\cov_c\!\equiv\!\cov(\mPhi_c)$, $\cov^*_c\!\equiv\!\cov(\mPhi^*_c)$, $\vmu_c(\mPhi)$ and $\vmu^*_c(\mPhi^*)$ denote the covariances and means, respectively, one covariance/mean pair per network stream per class. Coefficients $\alpha_1$, $\alpha_2$ control the degree of the scatter and mean alignment, $\tau$ controls the $\ell_2$-norm of feature vectors.

\section{Modifications to the So-HoT Approach}
\label{sec:modif}

Algorithm \ref{alg:alg1} details how we perform domain adaptation. We enable the alignment loss $\hbar$ only if the source and target batches correspond to the same class. Otherwise, the alignment loss is disabled and the total loss uses only the classification log-losses $\ell_{src}$ and $\ell_{trg}$. To generate the source and target batches that match w.r.t. the class label, we re-order source and target datasets class-by-class and thus each source/target batch contains only one class label at a time. Once all source and target datapoints with matching class labels are processed, remaining datapoints are processed next. Lastly, we refer readers interested in the details of the So-HoT algorithm and loss $\hbar$ to paper \cite{me_domain}.

\begin{algorithm}[h]
\vspace{-0.05cm}
\caption{Batch generation + a single epoch of the training procedure on the source and target datasets.}\label{alg:alg1}
\begin{algorithmic}[1]
\State $src\_data:=\text{sort\_by\_class\_label}(src\_data)$
\State $target\_data:=\text{sort\_by\_class\_label}(target\_data)$
\State $C_s  $\Comment{Number of the source classes}
\State $C_t  $\Comment{Number of the target classes}
\State $C_{s\cap t} $\Comment{Number of classes in common}
\Procedure{epoch}{$src\_data,target\_data,batch\_size$}
\Comment{Training (one epoch)}
\For{$i \gets 1:max(C_s, C_t)$ }
\If{$i \leq C_s $} \State $batch_s \gets \text{Choose}(src\_data, i, batch\_size)$
\Comment{`Choose' pre-fetches data of class i}
\Else \State $batch_s \gets \text{Choose}(src\_data, rnd(), batch\_size)$
\Comment{`Choose' pre-fetches data of random class}
\EndIf
\If{$i \leq C_t $} \State $batch_t \gets \text{Choose}(target\_data, i, batch\_size)$ 
\Else \State $batch_t \gets \text{Choose}(target\_data, rnd(), batch\_size)$
\EndIf
\If{$i \leq C_{s\cap t} $} 
\State $Loss \gets \ell_{src} + \ell_{trg} + \hbar$
\Else 
\State $Loss \gets \ell_{src} + \ell_{trg}$
\EndIf
\State Forward($net\_data$, $batch\_s$, $batch\_t$)
\State Backward($net\_data$, $batch\_s$, $batch\_t$)
\State Update($net\_data$, $batch\_s$, $batch\_t$)
\EndFor
\EndProcedure
\end{algorithmic}
\vspace{-0.15cm}
\end{algorithm}


%
\begin{figure*}[h]
\centering
\ifdefined\arxiv
\else
\vspace{-0.35cm}
\fi
\vspace{-0.3cm}
\begin{subfigure}[b]{0.245\linewidth}
\centering
\includegraphics[trim=0 0 0 0, clip=true, height=2.4cm]{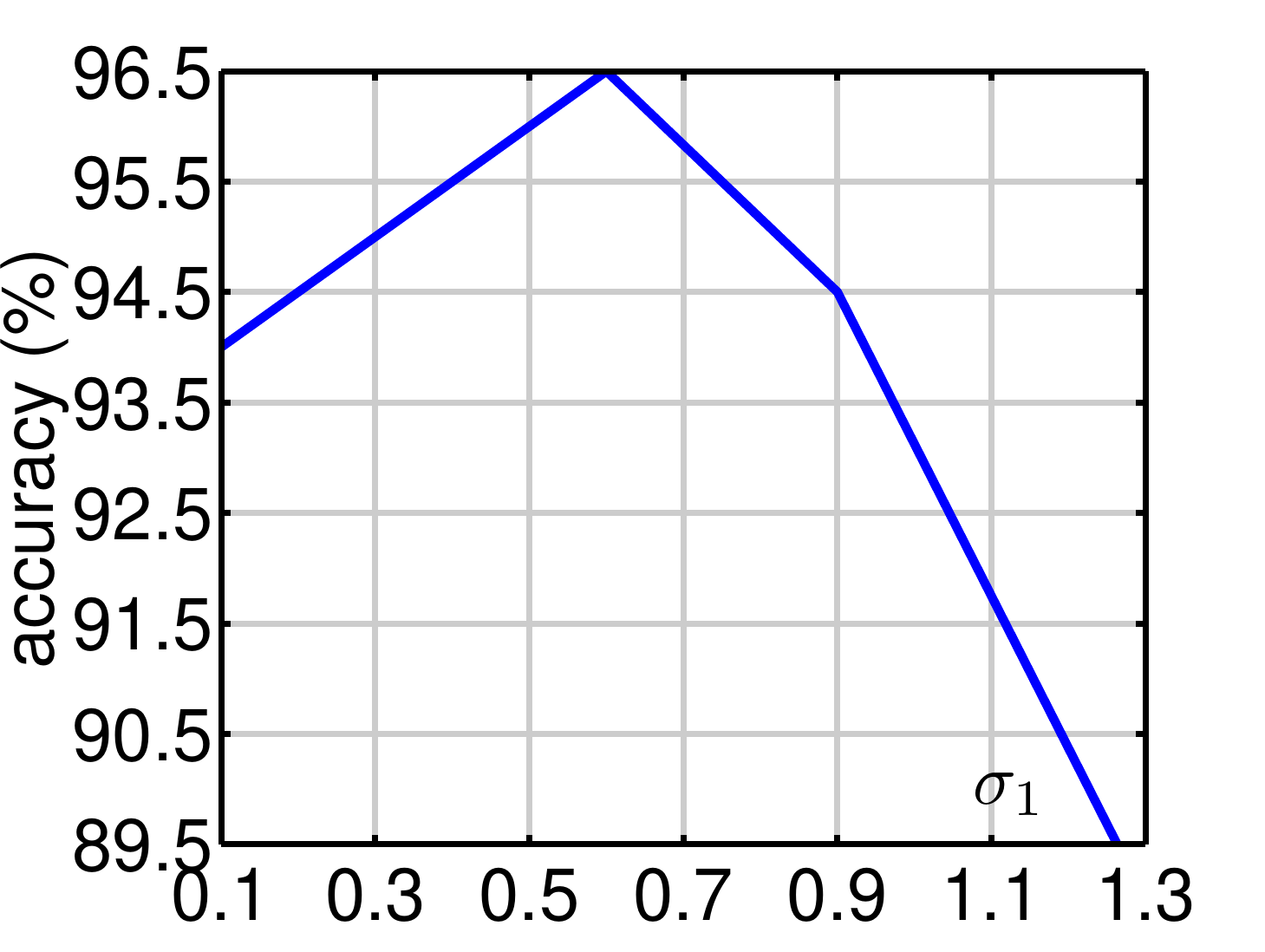}
\ifdefined\arxiv
\else
\vspace{-0.5cm}
\fi
\caption{\label{fig:utk_sig1a}}
\end{subfigure}
\begin{subfigure}[b]{0.245\linewidth}
\centering
\includegraphics[trim=0 0 0 0, clip=true, height=2.4cm]{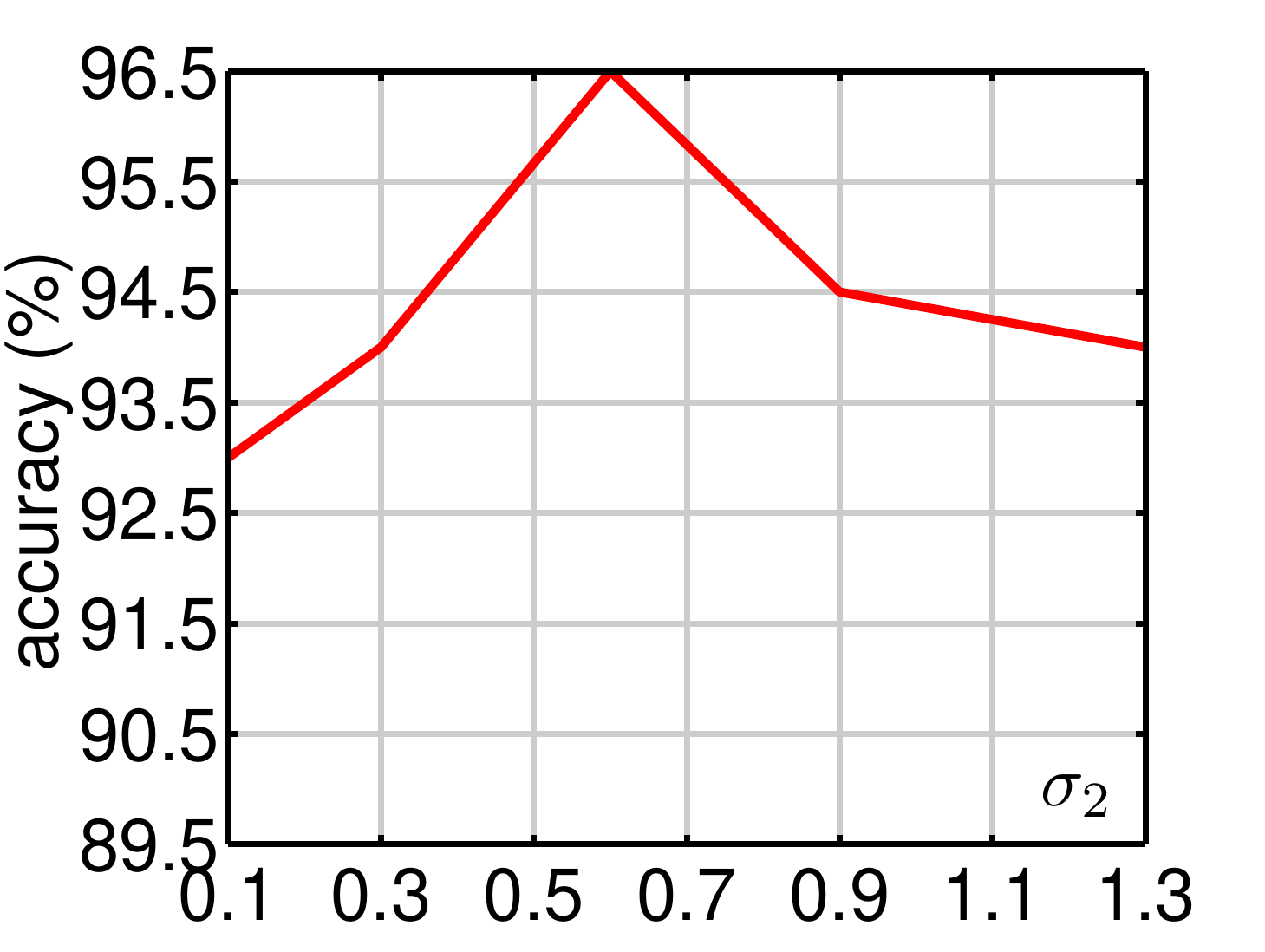}
\ifdefined\arxiv
\else
\vspace{-0.5cm}
\fi
\caption{\label{fig:utk_sig2b}}
\end{subfigure}
\begin{subfigure}[b]{0.245\linewidth}
\centering
\includegraphics[trim=0 0 0 0, clip=true, height=2.4cm]{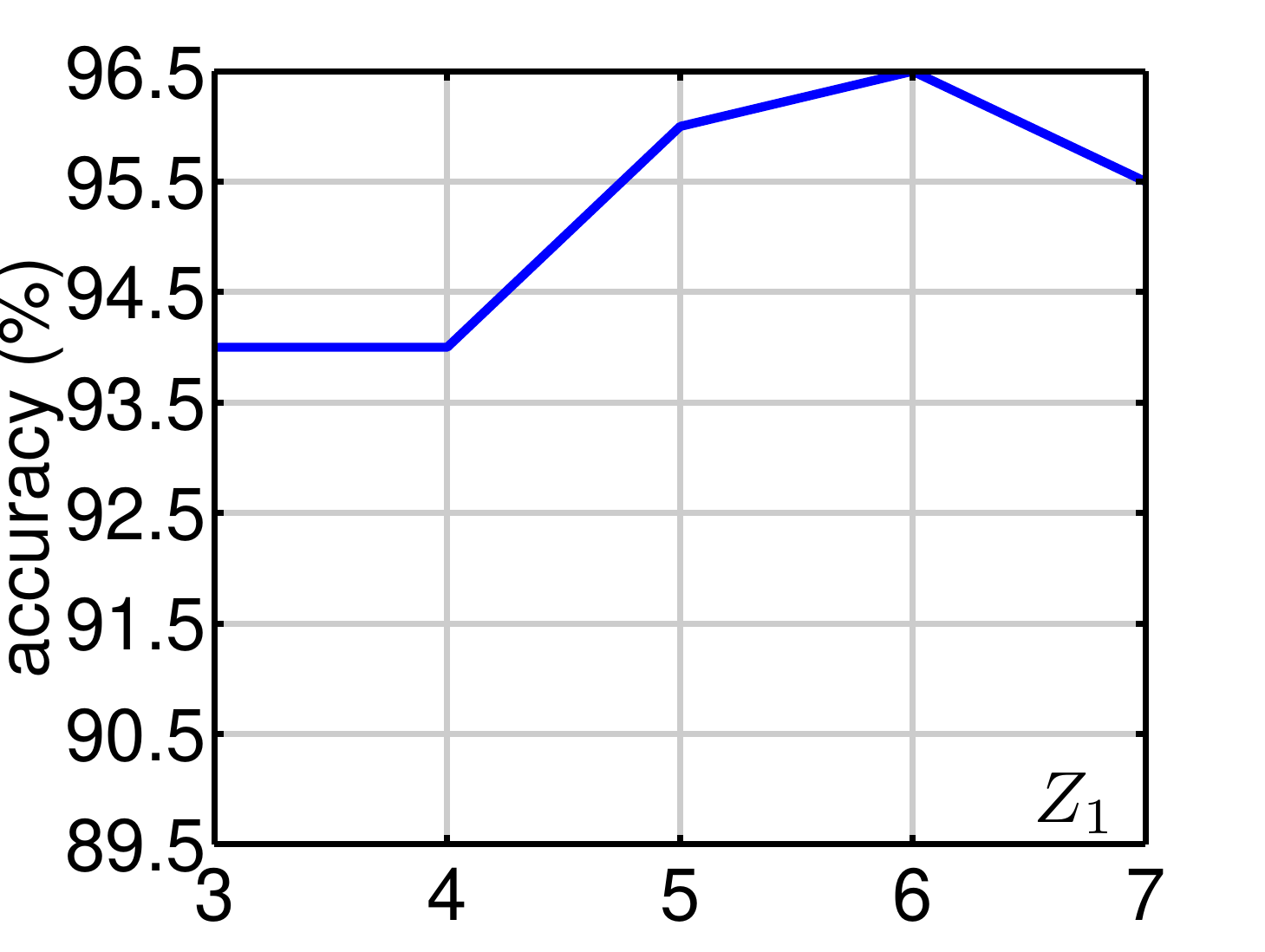}
\ifdefined\arxiv
\else
\vspace{-0.5cm}
\fi
\caption{\label{fig:utk_z1a}}
\end{subfigure}
\begin{subfigure}[b]{0.245\linewidth}
\centering
\includegraphics[trim=0 0 0 0, clip=true, height=2.4cm]{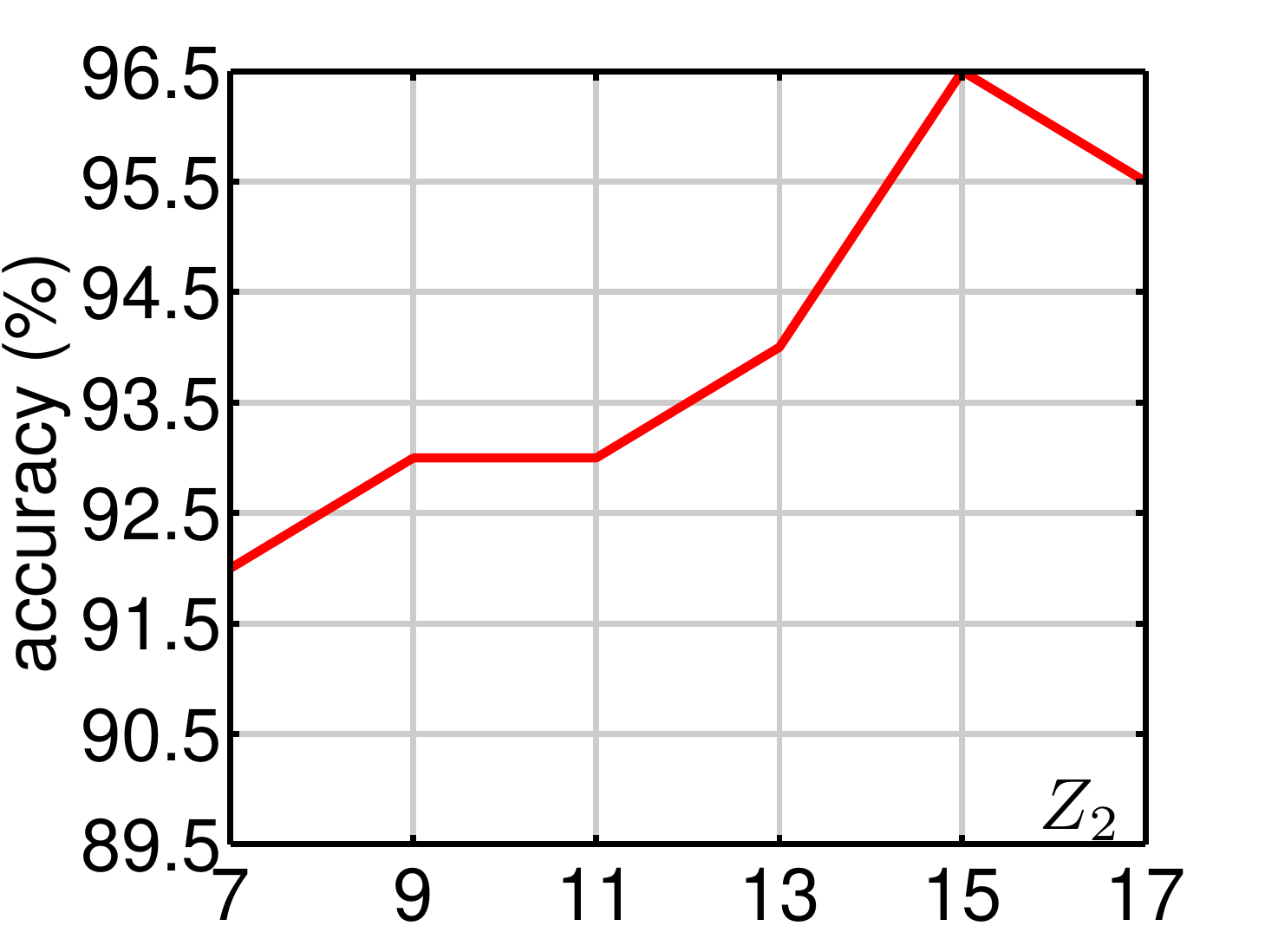}
\ifdefined\arxiv
\else
\vspace{-0.5cm}
\fi
\caption{\label{fig:utk_z2b}}
\end{subfigure}
\vspace{0.0cm}
\ifdefined\arxiv
\vspace{-0.5cm}
\else
\fi
\caption{Sensitivity w.r.t. parameters $\sigma_1$ and $\sigma_2$ on UTK. Figures \ref{fig:utk_sig1a}, \ref{fig:utk_sig2b}, \ref{fig:utk_z1a} and \ref{fig:utk_z2b} show the accuracy w.r.t. $\sigma_1$ ($\sigma_2\!=\!0.6$), $\sigma_2$ ($\sigma_1\!=\!0.6$), $Z_1$ ($Z_2\!=\!15$) and $Z_2$ ($Z_1\!=\!5$), respectively.
}\vspace{-0.3cm}
\label{fig:params}
\end{figure*}

\end{appendices}


{\small

}

\end{document}